\documentclass{article}  

\newenvironment{tightList}
{\begin{list}{}{\partopsep=\baselineskip
	\parskip=0pt
	\parsep=0pt
	\topsep=0pt
	\itemsep=0pt
	\labelwidth=0pt
	\itemindent=-10pt}}
{\end{list}}

\usepackage{amsmath,amsthm,amsfonts,amssymb,verbatim, stmaryrd} 
\newcommand{\Real}{\mathbb{R}}

\usepackage{float}
\usepackage{graphicx}

\usepackage[margin=2in]{geometry}

\newtheorem{theorem}{Theorem}[section]
\newtheorem{definition}[theorem]{Definition}

\newtheorem{prop}[theorem]{Proposition}

 \setlength{\parindent}{0pt}
\setlength{\parskip}{4pt}
 
\usepackage{url}

\usepackage{datetime}
\yyyymmdddate

\setcounter{secnumdepth}{4}

\usepackage{amsmath}

\usepackage{tikz-cd}

\makeatletter
\renewcommand*{\@fnsymbol}[1]{\ensuremath{\ifcase#1\or \ \or *\or \dagger\or \ddagger\or
   \mathsection\or \mathparagraph\or \|\or **\or \dagger\dagger
   \or \ddagger\ddagger \else\@ctrerr\fi}}
\makeatother


\DeclareMathOperator{\ttv}{\operatorname{TTV}}

\newcommand{\ic}[1]{\mathcal{#1}}
\newcommand{\cC}{\ic{C}}
\newcommand{\cD}{\ic{D}}
\newcommand{\cS}{\ic{S}}

\newcommand{\kernel}{\operatorname{\mathcal{K}}}
\newcommand{\cokernel}{\operatorname{{\mathcal{C}o}\mathcal{K}}}


\usepackage{mathrsfs}   
\newcommand{\scale}{\mathscr{M}}

\newcommand{\per}[1]{\operatorname{per}(#1)}
\newcommand{\ap}[1]{\operatorname{\mathcal{A}}(#1)}
\newcommand{\prgn}[1]{\operatorname{\mathcal{P}}(#1)}

\newcommand{\perboom}[3]{#1\triangleright#2\triangleleft#3} 

\newcommand{\halfsupport}[1]{{#1}^\bullet}



\newcommand{\intervalOC}[2]{\boldsymbol{(}#1\ #2\boldsymbol{]}}


\title{Persistence Lenses: \\Segmentation, Simplification, Vectorization, \\Scale Space and Fractal Analysis of Images}
\author{Martin Brooks \\ \emph{martin.brooks@varilets.org} \thanks{\copyright 2016 Martin Brooks}}
\date{\today}

\begin{document}       
\maketitle 
\begin{abstract}
A \emph{persistence lens} is a hierarchy of disjoint upper and lower level sets of a continuous luminance image's Reeb graph, providing a contrast-invariant topological representation of image contrast variation.  
Pulled back to the image, the boundary components of a persistence lens's interior components are Jordan curves that serve as a hierarchical segmentation of the image, and may be rendered as vector graphics.
A persistence lens determines a \emph{varilet basis} \cite{varilets} for the luminance image, in which image simplification is a realized by subspace projection. 
Image scale space, and image fractal analysis, result from applying a scale measure to each basis function.

\end{abstract}

\pagebreak
\tableofcontents

\pagebreak
\section{Introduction}

Variational and morphological image processing \cite{Mumford_Shah, hierarchical_segmentation_4, scale_sets, hierarchical_segmentation_5} employ diverse types of image segmentation for subsequent piecewise approximation by smooth or constant functions. The varilet transform's \emph{lens} parameter \cite{varilets} plays this same role: a lens comprises a hierarchy of nested \emph{facets}, each bounded in the image plane by one or more Jordan level sets, thereby defining a multiresolution segmentation. 

A \emph{persistence lens} uses the the level sets of the critical points of persistence birth-death pairs \cite{EdelsbrunnerBook}.
Thus, regions of lesser contrast are contained in regions of greater contrast; the nesting structure is a contrast-invariant topological representation of contrast variation with the image.  


When augmented with a local geometric, topological or image measure, a persistence lens's segmentation hierarchy becomes a scale space, which for natural images may exhibit power law distributions over large regions. 

Varilet transforms apply to a continuous interpolation of the image's luminance channel, where the scalar pixel values are considered as a grid of samples on the image plane continuum. 
Subsequent analysis remains in the continuous domain, using vector graphics to realize image segmentation and simplification. 

\section{Example}
\label{example_section}

This section provides an annotated example, intended to create motivation and perspective for the technical sections that follow.

Figure \ref{original_image} is the original image from which the subsequent images are derived.
Figure \ref{luminance_image} is the luminance image, which defines the continuous interpolation acted upon by varilet transforms.
The figures represent the images in png format, at their native size for 72 dpi.

We will use a persistence lens generated by the algorithm described in section \ref{lens_algo_section}. 
The lens has 29,298 lens regions, for a total of 43,572 facets; the first level of the hierarchy has 6,308 facets; the hierarchy has maximum lens region  nesting depth 19.
 
Figure \ref{full_gray} visualizes the lens by its segmentation of the continuous image, with each each lens facet filled gray; figure \ref{full_color} shows each facet filled with color.
Figure \ref{full_close_1} is close-up of a portion of figure \ref{full_color}; figure \ref{full_close_2} is a further magnification.
Figure \ref{depth_1_color} shows all facets at the first level of the lens hierarchy, figure \ref{depth_1_contour} shows their boundary components, and figure \ref{depth_1_single_facet} highlights a single facet.

The images shown in figures \ref{full_gray} - \ref{depth_1_single_facet} are vector graphics; however, the figures represent the vector images in png format with sufficient resolution for modest enlargement.

Figure \ref{fractal} indicates image regions having fractal structure, by identification of a power law distribution of their facets' contrast. In section \ref{fractal_section} we discuss other measures that may also generate fractal structure, including facet area and topological total variation.

Figures \ref{scale_space} shows three images from an image scale space based on topological total variation.

\newgeometry{margin=1in}

\begin{figure}[H]\centering
\includegraphics[width=6.6805in]{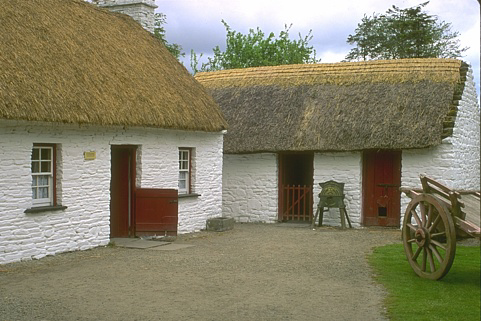}
\caption{Original image, a 481x321 8-bit RGB jpeg, number 385028 from the Berkeley Segmentation Database \cite{berkeley_database}.}
\label{original_image}
\end{figure}

\begin{figure}[H]\centering
\includegraphics[width=6.6805in]{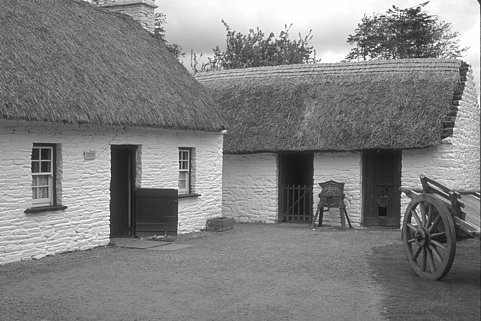}
\caption{Luminance image derived from the original. The varilet transform acts on scalar fields, thus the luminance image will be interpolated to a continuous function. The luminance values are treated as samples of a continuous function on the underlying plane continuum; the samples are located at the corners of a 480 x 320 grid of squares. Interpolation of each square patch is bilinear, except when bilinear interpolation creates a saddle point in the interior of the patch; in this case the square patch is replaced with
four triangular patches having a common vertex at the saddle point location, so that every patch is topologically monotone.}
\label{luminance_image}
\end{figure}

\begin{figure}[H]\centering
\includegraphics[width=6.6805in]{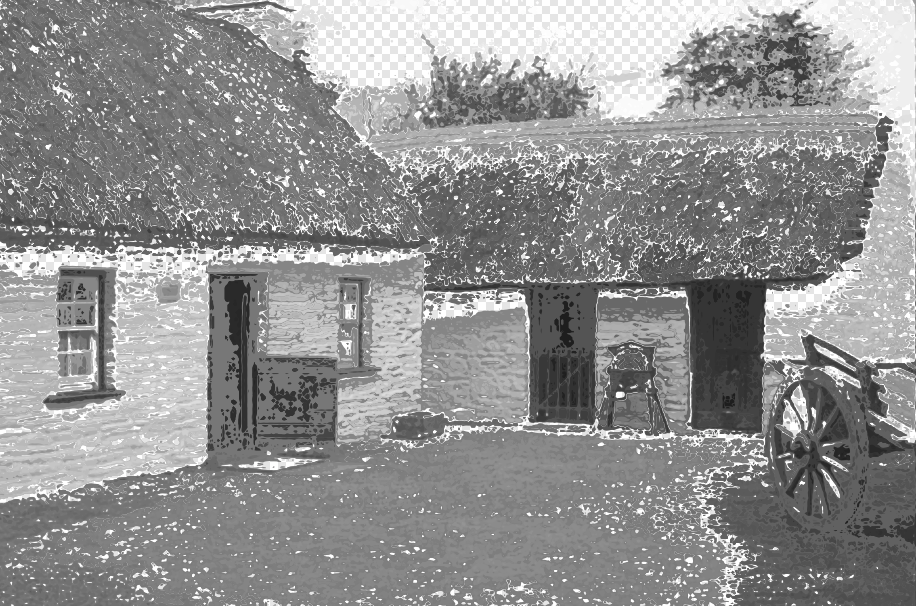}
\caption{Persistence lens visualized by its segmentation of the continuous image domain. 
Each of the 43,572 lens facets is filled with a single shade of gray, the luminance of the facet's boundary components. Each of the lens's facets is a connected open set; when it has more than one boundary component then the facet has holes.
Each boundary component is a Jordan (simple closed) curve upon which the  luminance is constant, with all components having this same value. 
As a vector image, each facet is represented as an SVG path element comprising of one or more closed polylines; they are correctly filled by the SVG even-odd fill rule.
}
\label{full_gray}
\end{figure}

\begin{figure}[H]\centering
\includegraphics[width=6.6805in]{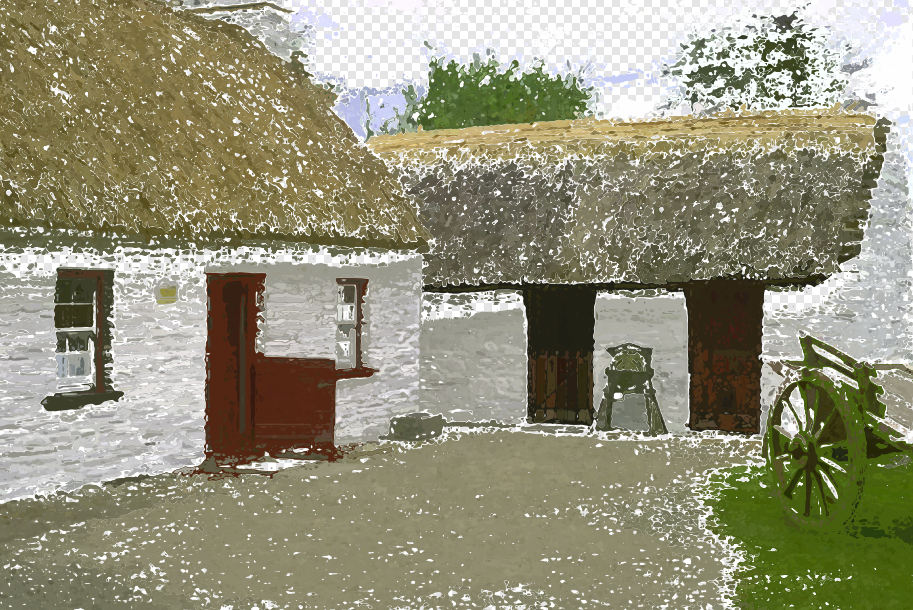}
\caption{Each lens facet filled with a single color, selected from the original image colors. As does figure \ref{full_gray}, this figure has unfilled regions, seen as the checkered background, which we now describe: The root of the lens hierarchy is the entire image; the unfilled regions of the image are the portion of the root not contained in any other facet. 
}
\label{full_color}
\end{figure}

\begin{figure}[H]\centering
\includegraphics[width=6.6805in]{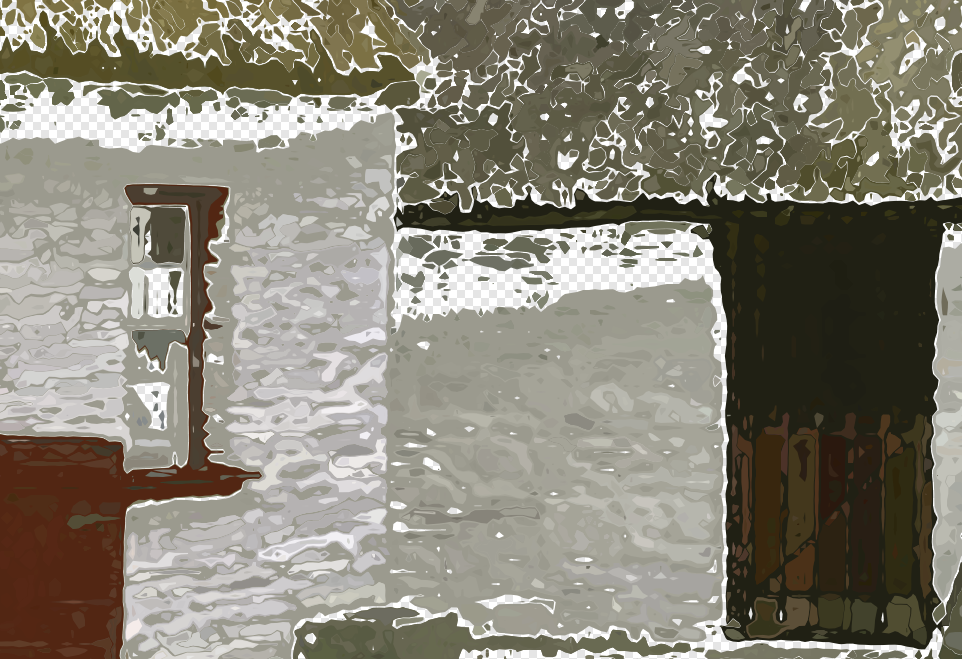}
\caption{Low magnification close-up of a portion of figure \ref{full_color}, showing lens facet nesting.}
\label{full_close_1}
\end{figure}

\begin{figure}[H]\centering
\includegraphics[width=6.6805in]{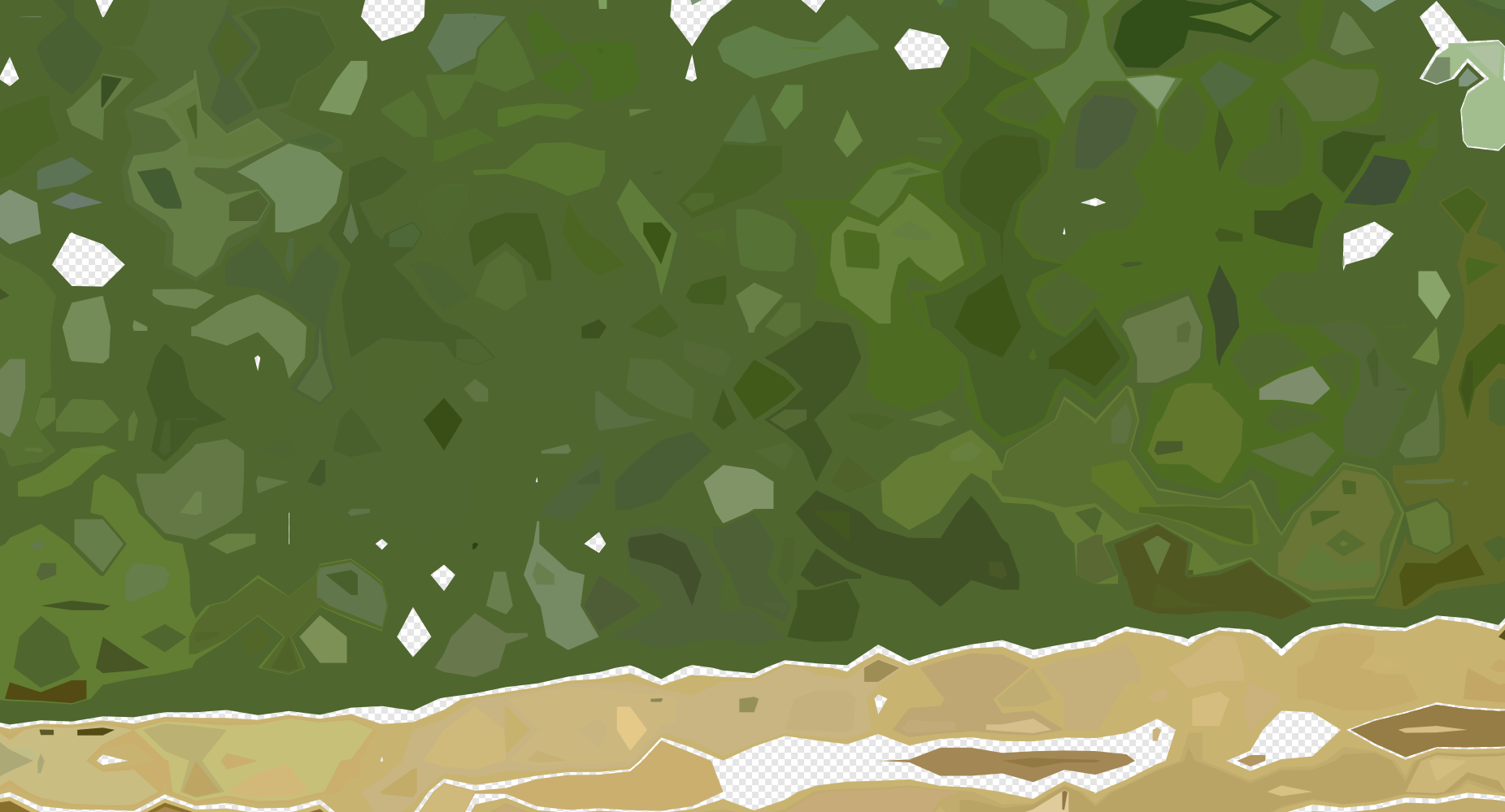}
\caption{High magnification close-up of a portion of figure \ref{full_close_1}, showing the level of detail arising from continuous interpolation of the image.}
\label{full_close_2}
\end{figure}

\begin{figure}[H]\centering
\includegraphics[width=6.6805in]{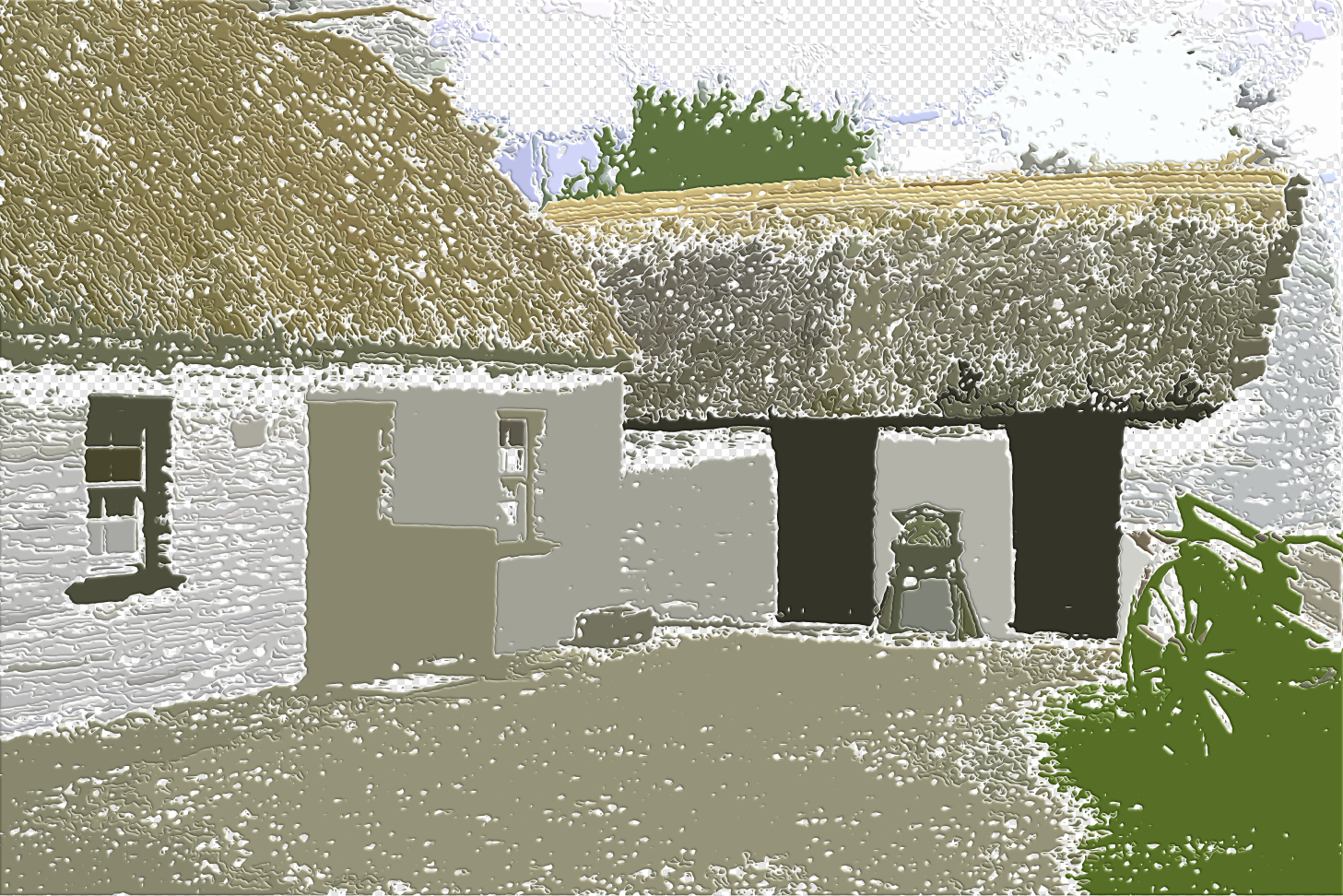}
\caption{Filled facets for the first level of the lens' hierarchy. These 6,308 facets are pairwise disjoint.}
\label{depth_1_color}
\end{figure}

\begin{figure}[H]\centering
\includegraphics[width=6.6805in]{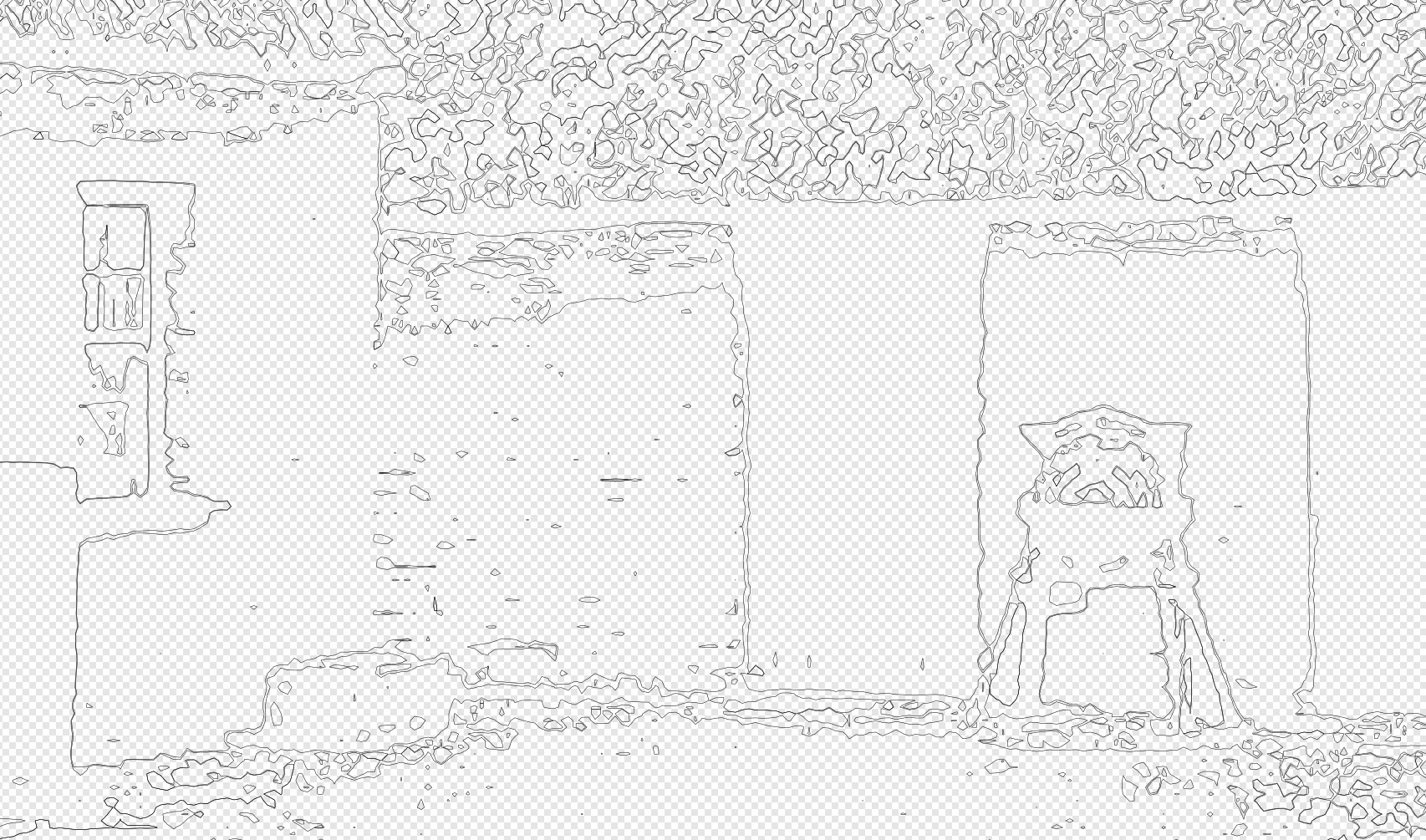}
\caption{Close-up of lens facet boundaries for the first level of the lens' hierarchy. We have not drawn the boundary components with the hyperbolic segments that result from bilinear interpolation; instead we have drawn straight line segments.}
\label{depth_1_contour}
\end{figure}

\begin{figure}[H]\centering
\includegraphics[width=4.0in]{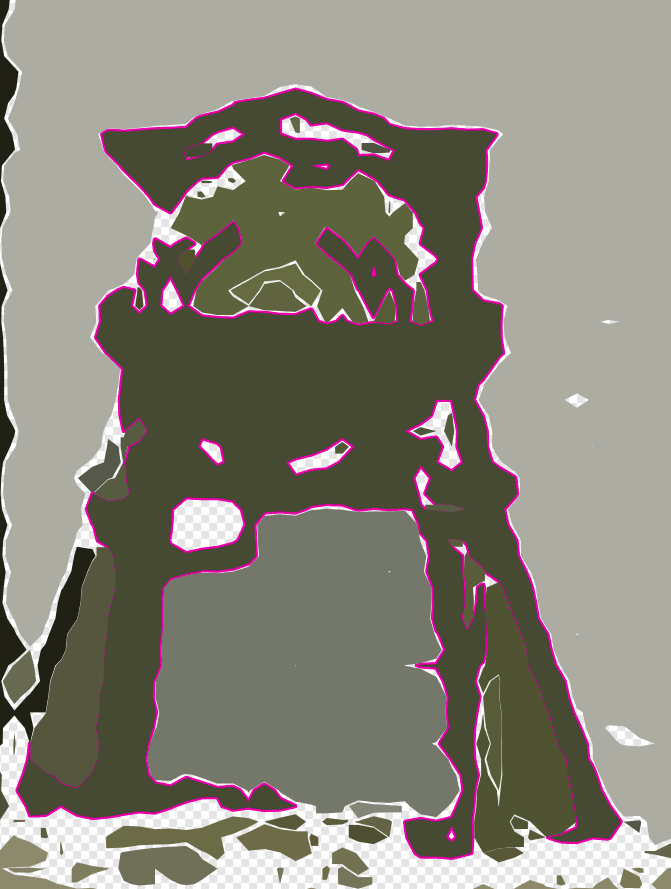}
\caption{Close-up of of a single facet at the first level of the lens' hierarchy, shown with pink outline.} 
\label{depth_1_single_facet}
\end{figure}

\begin{figure}[H]\centering
\includegraphics[width=6.6805in]{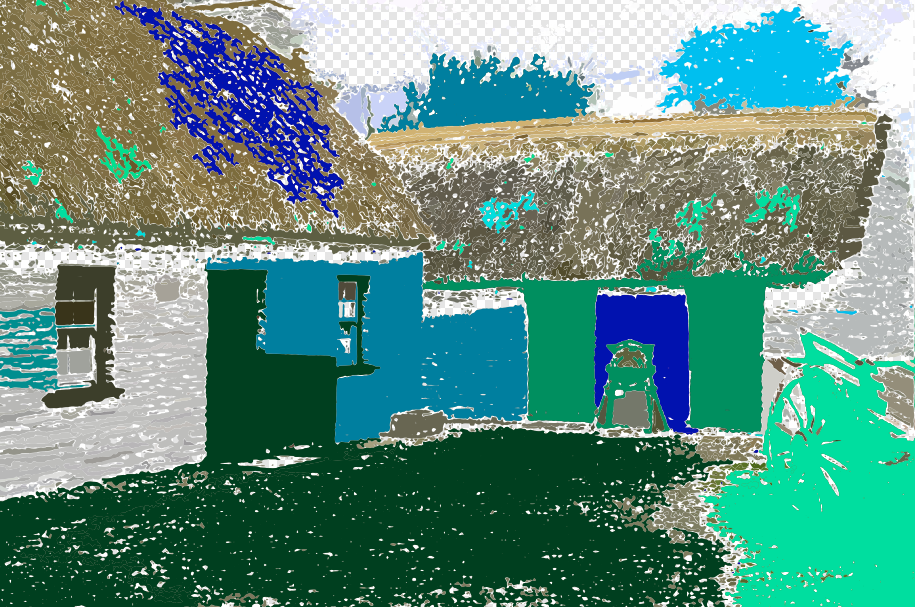}
\caption{First level facets of figure \ref{depth_1_color}, overlaid with brightly colored regions each having a distinct power law distribution of its sub-facets' contrasts. The power law exponents typically lie in the range 2-4, estimated by the maximum likelihood method of Clauset \cite{Clauset}.}
\label{fractal}
\end{figure}

\begin{figure}[H]\centering
\includegraphics[width=4in]{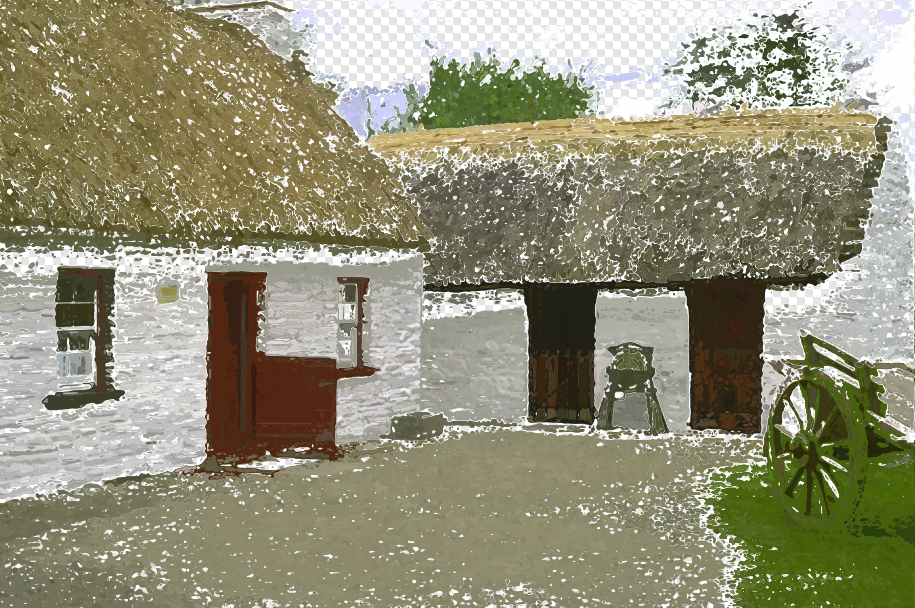}
\includegraphics[width=4in]{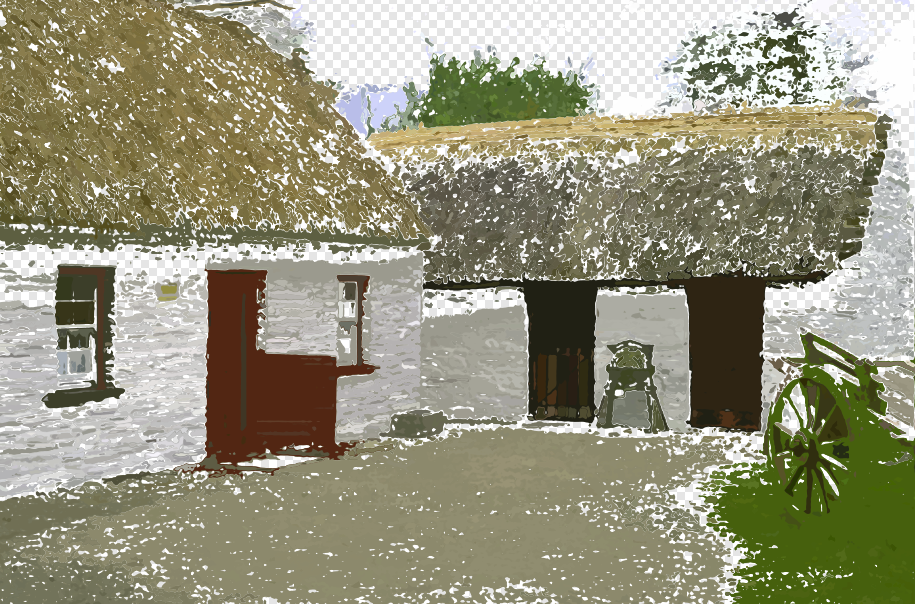}
\includegraphics[width=4in]{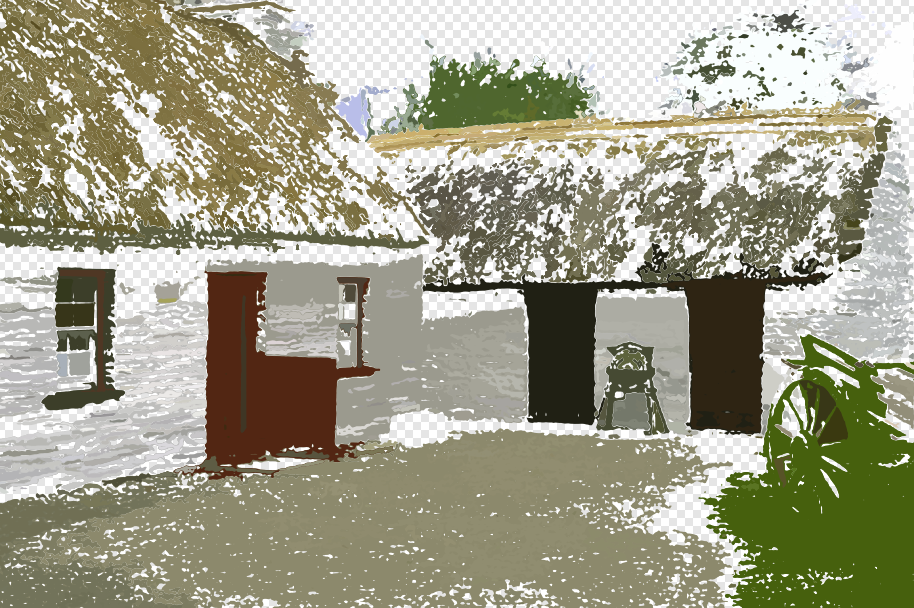}
\caption{Segmentation of lenses of successively coarser scale space images, using topological total variation as scale measure.}
\label{scale_space}
\end{figure}

\newgeometry{margin=2in}
\section{Theory}
\label{theory_section}

We provide an overview of varilet analysis for image processing.

Varilet analysis applies to real-valued continuous functions $f:X \to \Real$ on compact metric space $X$. 
For image analysis, $X$ is the image plane and $f$ is an interpolation of the images's luminance channel\footnote{Future research includes the possibility of using more advantageous ``scalarization'' of color images, such as Gooch et al.\ \cite{Gooch_gray}}.


All spaces in this paper are compact metric spaces and all functions are continuous.
For a subset $S$ of a topological space, we denote the interior by $S^{\circ}$, the closure by $\overline{S}$, the boundary by $\partial S$, and set difference by $S \smallsetminus T$.


\subsection{Varilets}

We briefly review the results of \cite{varilets}, starting with a classic result of analytic topology.

Continuous function $f:X \to Y$ is \emph{monotone} when $f^{-1}y$ is connected for all $y \in f(X)$; thus $f^{-1}$ carries connected sets to  connected sets.
$f$ is \emph{light} when $f^{-1}y$ is totally disconnected for all $y \in f(X)$.

The \emph{monotone-light factorization} \cite{Whyburn1942, walker1974, Lord1997} states that for every continuous function $f:X \to Y$ there exists a unique compact metric space $M$, called $f$'s \emph{middle space}, such that  $f =  \lambda \circ  \mu$, where $\mu :X \to M$ is monotone and $\lambda :M  \to Y$ is light. 
\begin{equation} 
\label{mlf_diag}
\includegraphics[width=2.0in]{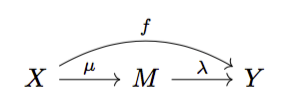}
\end{equation}

We restrict our attention to scalar fields, i.e. $Y = \Real$.
We denote $f$'s monotone-light factorization by $\mu M\lambda$.

\subsubsection{Piecewise Monotone Functions}


\begin{definition}
\label{pm_def}
Scalar field $f:X \to \Real$ is \emph{piecewise monotone} when  $M$ is a finite graph.
\end{definition}

For piecewise monotone $f$, the middle space $M$ is graph-theoretically and topologically identical to $f$'s Reeb graph \cite{Reeb1946}.
The monotone-light factorization is a useful context for the Reeb graph. 

For an image, where $f$ interpolates the luminance channel, $f$ can always be chosen to piecewise monotone\footnote{A forthcoming paper will discuss interpolation schemes for varilet analysis.}.



Most varilet analysis is couched in terms of $\lambda$ and $M$, subsequently using monotone factor inverse $\mu^{-1}$ to get back to  domain $X$. 
This is powerful, because $M$ enjoys the simple topology of graph continua \cite{Nadler1992}. 
Monotonicity of $\mu$ makes the theory oblivious to the many complexities of continua.

$f$'s \emph{critical points} are the vertices of $M$; points within an edge are \emph{regular points}.

The light factor $\lambda$ is numerically monotone along each edge of $M$. 
If vertex $p$ terminates both an increasing and a decreasing edge, then it is a \emph{saddle}; in this case $p$ terminates three or more edges. A vertex at which all edges have the same direction is an \emph{extremum}, either a maximum or minimum. $f$'s \emph{global extrema} are the points of $M$ mapped by $\lambda$ to $\lambda(M)$'s maximum and minimum values; all other extrema are \emph{local}. 

An interpolated image's domain is typically a rectangular region, for which the Reeb graph is a tree, also know as the \emph{contour tree} \cite{Carr2000}. However, data missing from the image may result in holes in the domain, or the image may have a topologically more complex domain, in which case the graph will have loops.
Varilet analysis applies equally well in all cases. 

\subsubsection{Varilets and Varilet Transforms}
\label{varilet_section}

We use the light factor $\lambda$ to measure length of edges in $M$: for edge $E$ terminated by vertices $p, q$, the length of $E$ is $|\lambda(p) - \lambda(q)|$.

\begin{definition}
\label{ttv_def}
\emph{Topological total variation} $\ttv(f)$  is the sum of all $M$'s edge lengths. 
\end{definition}

\begin{definition}
\label{varilet_basis_def}
A \emph{varilet basis} is a collection of piecewise monotone functions $\{g_i:X \to \Real\ |\  i = 0 \ldots N\}$, called \emph{varilets}, such that every linear combination $\sum a_ig_i$ has topological total variation
\begin{equation}
\label{ttv_equation}
\ttv(\sum a_ig_i) = \sum |a_i|\ttv(g_i)\ . 
\end{equation}
\end{definition}

Varilets are independent in the sense that $\sum a_ig_i = \sum b_ig_i$ only when all $a_i = b_i$.
Please note that equation (\ref{ttv_equation}) differs by a normalization factor from the formulation of \cite{varilets}.   

Topological total variation's relation to a varilet basis is analogous to that of energy for a finite Fourier basis.
The name \emph{\underline{vari}let} stems from this partitioning of topological total variation.

\begin{definition}
\label{varilet_transform_def}
A \emph{varilet transform} for $f$ is a varilet basis for which 
\begin{equation} 
\label{varilet_sum_eqn}
f =\sum g_i\ . 
\end{equation} 
\end{definition}

In \cite{varilets}, I provide a simple mathematical algorithm that  produces many different varilet transforms for any piecewise monotone $f$.
Each transform is specified by a special type of finite hierarchy on $f$'s middle space, which we call a \emph{lens}.

\subsubsection{Varilet Lens}

A varilet lens for scalar field $f$ is defined in terms of $f$'s monotone-light factorization $\mu M\lambda$.
A lens is a subset of the middle space $M$.

\begin{definition}
\label{varilet_lens}
\begin{tightList}
\item 
\item A subset $C \subset M$ is a \emph{lens region} for $f$ when $C$ is closed, connected, has nonempty interior, and $\lambda$ is constant on $\partial C$. 
\item A \emph{lens facet} is a connected component of a lens region's interior.
\item A \emph{varilet lens} for $f$ is a finite collection $\cC = \{C_i\ |\ i = 0 \ldots n\}$ of lens regions, with $C_0 = M$,  such that any two $C_i, C_j$ are either nested or disjoint.
\end{tightList}
\end{definition}

Let  $\cC = \{C_i\ |\ i = 0 \ldots n\}$ be a varilet lens for $f$.
Each lens region $C_i \in \cC$ and each facet $F \subset C_i$ may be pulled back to $f$'s domain by $\mu^{-1}$. 

Because $\mu^{-1}(F)$ is connected and open, the boundary components of $\mu^{-1}(F)$ are equal-luminance Jordan curves. 



The lens hierarchy also organizes a lens's facets: Let $F^i_1 \ldots F^i_m$ be $C_i$'s facets.
When another lens region $C_k \subset C_i$, then each of $C_k$'s facets is a subset of some $F^i_j$\ .

\subsubsection{Varilet Supports}

Given a varilet lens  $\cC = \{C_i\ |\ i = 0 \ldots n\}$ for $f$, the varilet transform produces one varilet basis function $g_i$ for each lens region $C_i \in \cC$.
Then each basis function
 $g_i$'s \emph{support} is the following  subset $D_i \subset C_i$:

\begin{equation}
\label{support_eqn}
D_i = \overline{C_i \smallsetminus \cup\  \{C_j\ |\ C_j \subsetneq C_i\}}\ .
\end{equation}

Each varilet $g_i$ is \emph{non-constant} only on its support's inverse image $\mu^{-1}D_i$.\footnote{``Support'' traditionally means ``non-zero'', but here means ``non-constant''. }

For $i \neq 0$, support $D_i$ always contains at least one point of $\partial C_i$.

Suppose $C_k \subset C_i$ is an \emph{immediate successor} in the lens hierarchy, i.e.\  there does not exist lens region $C_j \in \cC$ with $C_k \subsetneq C_j \subsetneq C_i$. Then $D_i$ contains at least one point of $\partial C_k$.

This discussion enables us to define the \emph{half-open support} $\halfsupport{D_i} \subset D_i$:

\begin{equation}
\label{half_support_eqn}
\halfsupport{D_i} = D_i \smallsetminus \partial C_i\ .
\end{equation}


The collection $\halfsupport{D_0} \ldots \halfsupport{D_n}$ partitions the middle space $M$.

\subsection{Image Segmentation}
\label{segmentation_section}

A varilet lens' supports $D_0 \ldots D_n$  cover $M$, intersecting only at their boundaries.
This makes the supports' inverse images $\mu^{-1}(D_0) \ldots \mu^{-1}(D_n)$ a natural candidate for segmentation.

However, for image processing we take a different approach.
The image segmentation for lens $\cC = \{C_i\ |\ i = 0 \ldots n\}$ is defined as the collection of inverse images of all of $\cC$'s  facets; we will also refer to these as facets. 

As discussed above, each lens facet is bounded by one or more constant-luminance Jordan curves, and the facets inhabit $\cC$'s lens region hierarchy.



The figures of the section \ref{example_section} show the unfilled regions resulting from segmentation incompleteness.

The unfilled regions include image domain points that do not lie in the first level of the segmentation.
These will always exist, because the first level lens regions are closed and disjoint.

Additionally, for any first level lens region $C \in \cC$, its boundary's inverse image $\mu^{-1}(\partial C)$ may \emph{properly} contain the Jordan boundary components of its facets; this difference will also be part of the unfilled region.
The unfilled points of $\mu^{-1}(\partial C)$ include various contour phenomena, including junctions, crack tips \cite{Mumford_Shah}, and regions of nonempty interior.



Varilet analysis articulates multiscale Jordan boundary components, at the cost of not articulating more complex phenomena.
This may offer practical advantages for image processing:

\begin{tightList}
\item[$\bullet$] By working exclusively with Jordan boundary components, we avoid the full complexity of image contour lines.
\item[$\bullet$] The image topology data required for the hierarchy of Jordan boundary components is acquired as part of the construction of the monotone-light factorization, leveraging a rich literature on image Reeb graph construction, e.g.\ \cite{Carr2000, Chao}. 
\item[$\bullet$] Truncation of recursion depth corresponds to image simplification (section \ref{simplification_section}).  
\item[$\bullet$] Vector graphics realization of filled segments is immediate, with lens facet drawing order reflecting the lens hierarchy from root down to leaves.
\begin{tightList}
\item[-] Vector graphics admits application of continuous mathematics.
\item[-] Vector graphics supports exploratory data analysis by zooming.
\end{tightList}
\end{tightList}

\subsection{Vectorization of Segmentation}
\label{vectorization_section}

Vectorization of Jordan boundaries is accomplished as follows: $f$'s middle space is constructed by sweeping a plane through $f$'s graph. For every interval of luminance values \emph{not} containing a sample or critical point, we save the sequences of pixel coordinates through which the plane passes. Subsequently, when we need to vectorize a lens facet boundary having luminance value $L$, we use $L$ to lookup the coordinate sequence, then calculating the interpolated sub-pixel path of the boundary as an SVG polyline. 

The main complication stems from boundaries that intersect the image frame; these are completed to a simple closed curve by continuing around the frame until meeting the other end. Boundaries are \emph{oriented}, thereby providing disambiguation at several algorithmic junctures. 

A lens facet is rendered as an SVG draw command comprising the collection of closed polylines of its boundary components.

Lens facets are correctly filled, including holes, by SVG's even-odd fill rule.
Fill color may be chosen as the gray level of the facet's boundary (figure \ref{full_gray}), or may be selected from the pixel color statistics of the facet region.

\subsection{Image Simplification}
\label{simplification_section}

Given  interpolated luminance image $f$ and lens $\cC= \{C_i\ |\ i = 0 \ldots n\}$, the varilet transform yields a varilet basis $\{g_i\ |\ i= 0 \ldots n\}$  such that $f = \sum g_i$.
From this basis we may construct filtered  $f' = \sum a_ig_i$, where each $a_i \in \Real$.

We restrict our attention to \emph{binary filters}, where either $a_i = 1$ or $a_i = 0$.
A binary filter is \emph{proper} when at least one, but not all, $a_i = 0$.

Suppose $f$ is a continuous luminance image having monotone-light factorization $\mu M \lambda$.
Let $\cC= \{C_i\ |\ i = 0 \ldots n\}$ be a lens for $f$, and let $\{g_i\ |\ i= 0 \ldots n\}$ be the varilet basis resulting from the varilet transform. 

\begin{definition}
\label{simplification_def}
An \emph{image simplification} of $f$ is a proper binary filter $f' = \sum a_ig_i$ such that $a_i = 0$ implies $a_j= 0$ for all $C_j \subset C_i$ .
\end{definition}

The filtered function $f'$ defines a gray vector image, using a continuous gray scale $[0\ 255]$. 

In the following subsections we discuss characteristics of simplified images. 

\subsubsection{Simplification Kernel and Cokernel}

Suppose lens $\cC= \{C_i\ |\ i = 0 \ldots n\}$ and binary coefficients $a_0 \ldots a_n$ define simplified image $f'$.
Letting $D_0 \ldots D_n$ denote the varilet supports, we define the simplification's \emph{kernel} $\kernel$ and \emph{cokernel} $\cokernel$ as the following subsets of $f$'s middle space $M$:

\begin{align*}
\kernel &= \cup\ \{D_i\ |\ a_i = 0\}\\
\cokernel &= \cup\ \{D_i\ |\ a_i =1\}\ .
\end{align*}

$\kernel$ and $\cokernel$ cover $f$'s middle space; they intersect only at their boundaries, which are equal. 

From definition \ref{simplification_def} it follows that the components of $\kernel$ are lens regions.

\subsubsection{Simplification as Quotient}


Simplification $f'$'s monotone-light factorization has middle space $M'$ which is a topological quotient of $M$.
The quotient map $\phi:M \to M'$ is monotone.

$\phi$ is a homeomorphism on the cokernel's interior $(\cokernel)^\circ$; whereas each kernel component is mapped to a distinct point.

$f'$'s monotone-light factorization $\mu'M'\lambda'$  is diagrammed in 
 (\ref{simplfication_diagram}), wherein function $\lambda^\flat:M \to \Real$ is identical to $\lambda$ except that $\lambda^\flat$ is constant on each kernel component, and $\phi^{r}$ denotes any right inverse of $\phi$.

\begin{equation}
\label{simplfication_diagram}
\includegraphics[width=2.0in]{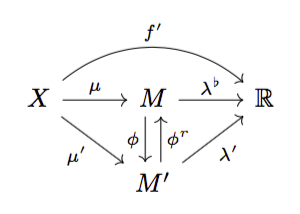}
\end{equation}

\subsubsection{Varilet Transform of Simplification}
\label{transform_simp_section}

From $f$'s lens $\cC= \{C_i\ |\ i = 0 \ldots n\}$ we may construct a \emph{simplified lens} $\cC'$ for simplification $f' = \sum a_ig_i$\ :

\begin{equation}
\label{simplified_lens}
\cC' = \{\phi(C_i)\ |\ a_i = 1\}
\end{equation}

Using this simplified lens, the varilet transform of $f'$  yields varilet basis $\{g_i\ |\ a_i = 1\}$, i.e.\  the basis of the simplification is the obvious subset of the original image's basis, here expressed in terms of varilet supports in the cokernel:

\begin{equation}
\label{simplified_sum_eqn}
f' = \underset{D_i \subset \cokernel}{\sum} g_i\ .
\end{equation}

Comparing equations (\ref{simplified_sum_eqn}) and (\ref{varilet_sum_eqn}), we see that the simplification is simply the original sum \emph{without} the terms for varilets having support in the kernel. This provides a useful characterization of the varilet transform: 
By transforming $f$'s representation to varilet basis functions $g_i$, simplification corresponds to deletion of terms from the sum $f = \sum g_i$\ ; i.e.\ $f'$ is the projection of $f$ to the subspace spanned by $\{g_i\ |\  a_i =1\}$\ .

When drawing varilet segmentations as vector graphics, the segmentation of a simplification is simply a truncation of the hierarchy of nested facets.

\subsection{Critical Point Trackability and Stability}
\label{trackability_section}

The early scale space papers of Witkin \cite{Witkin83} and Koenderink \cite{Koenderink} identify desiderata for scale space, paraphrased and contextualized as follows. 

\begin{tightList}
\item[(1)] Critical points can be tracked across scales.
\item[(2)] No new critical points are created as scale increases.
\item[(3)] Critical points have stable magnitude and location across scales.
\end{tightList}

By \emph{trackability} we mean \#1 \& 2; by \emph{stability} we mean \#3.



We discuss relationships between the critical points of $f$ and those of simplified image $f'$, using lens $\cC$, quotient map $\phi$, kernel $\kernel$ and cokernel $\cokernel$.

\begin{tightList}
\item{[A]}
Each critical point in the interior of kernel $\kernel$ is \emph{removed}. 
In other words, \emph{no} critical point $p \in \kernel^\circ$ persists through this simplification; these critical points track to \emph{nowhere}.
\item{[B]}
Each critical point $p$ in the interior of cokernel $\cokernel$ is \emph{retained}. 
In other words, \emph{every} critical point $p \in \cokernel^\circ$ persists, without change, through this simplification; these critical points track
one-to-one.
\item{[C]} 
For any critical point $p \in \partial \kernel$, whether, and in what form, $p$ persists through this simplification depends on information not found in the general case. 

\end{tightList}

Statements [A] and [B] make critical point tracking easy, whereas [C] requires an additional definition in order to ensure desideratum \#2.

\begin{definition}
\label{track_def}
Lens $\cC$ is \emph{trackable} when each lens region $C \in \cC$ contains a critical point in its boundary.
\end{definition}

Definition \ref{track_def} ensures desiderata \#1\&2.


Regarding desideratum \#3, when lens region $C \in \cC$ is a kernel component, then $f'$ has constant value $\lambda(\partial C)$ on $\mu^{-1}(C)$, and thus critical point values are stable. Critical point ``location'' can be considered stable in the limited sense that $p \in \phi^{-1}(p')$ implies $\mu^{-1}(p) \subset {\mu'}^{-1}(p')$, with equality holding in case [B].

\subsection{Persistence Lens}

The notion of lens is quite general, but for image processing we focus on \emph{persistence lenses}, utilizing the contours of critical points of persistence birth-death pairs \cite{EdelsbrunnerBook}. In this section we describe the construction and properties of persistence lenses.

The power of persistent homology \cite{EdelsbrunnerBook} for topological data analysis \cite{Carlsson2009} derives in part from its context in algebraic topology; however in this paper take a simpler approach for more limited results.


\subsubsection{Persistence}
\label{persistence_region_section}

Suppose scalar field $f$ has monotone-light factorization $\mu M\lambda$.
Working exclusively in $f$'s middle space $M$:

\begin{definition}
\label{persis_def}
Local maximum $p$'s \emph{persistence region} $\prgn{p}$ is the largest lens region containing $p$ such that:

\begin{tightList}
\item[] $\max\ \lambda(\prgn{p}) = \lambda(p)$\ and
\item[] $\min\ \lambda(\prgn{p}) = \lambda(\partial \prgn{p})$\ .
\end{tightList}

The definition is symmetric for minima.
For global extremum $p$, we define $\prgn{p}=M$.

Local extremum $p$'s \emph{persistence} is 
\begin{equation}
\per{p} = |\lambda(p) - \lambda(\partial \prgn{p})|\ .
\end{equation}
For global extrema, $\per{p}$ is equal to the length of $\lambda$'s image.
\end{definition}

Persistence regions of same-sense extrema are either disjoint or nested.
On the other hand,  persistence regions of opposite-sense extrema may have intersecting interiors without being nested. This situation is closely related to the problem of \emph{simplification conflicts} described by Edelsbrunner et al. \cite{Edelsbrunner2002}.

As in Agarwal et al.\ \cite{Agarwal2006} and Bauer et al.\ \cite{Bauer2014a}, persistence regions can be computed in two independent global sweeps of $f$'s middle space, with the up-sweep capturing minima and the down-sweep capturing maxima. 

We will use \emph{paths} in $M$. A path has no self-intersections. A path has a direction, starting at some point and ending at another.
Concatenation of paths $Q$ and $R$ is denoted $QR$. 

For local maximum $p$ (with a symmetric statement for minima), maximality of $\prgn{p}$ implies that there exists a critical point $q \in  \partial \prgn{p}$ and a path $R$ from $q$  to a point $r \notin \prgn{p}$ such that:

\begin{tightList}
\item[]  $\lambda(r) > \lambda(p)$\  and
\item[]  $R \cap \prgn{p} = \{q\}$\  and 
\item[] $\lambda(R \smallsetminus \{q\}) = \intervalOC{\lambda(q)}{\lambda(r)}$\  .
\end{tightList}

We call $q$ an \emph{apogee}, $r$ a \emph{dominator}, and $R$ a \emph{dominator path}.
The designation  ``apogee'' is a many-to-many generalization of persistence pairing;  ``dominator'' corresponds to the Elder Rule of Edelsbrunner \cite{EdelsbrunnerBook}.

The collection of all of $p$'s apogees is denoted $\ap{p}$. Not necessarily every point of $\partial \prgn{p}$ is an apogee, but there exists at least one, and (by definition) $\ap{p} \subset \partial \prgn{p}$.
The possibility of multiple apogees follows from the possibility of equal $\lambda$ values. One may easily construct examples for which two local extrema share an apogee. 
 Global extrema have no apogees.
 
 Any path $Q \subset \prgn{p}$ from $p$ to an apogee $q$ is an \emph{apogee path}. 
Concatenation of an apogee path $Q$ and an dominator path $R$ is a \emph{persistence path} $P = QR$, 
denoted $QR = \perboom{p}{q}{r}$, thereby indicating an extremum $p$, apogee $q$, and dominator $r$.

\subsubsection{Persistence Lens Definition}

Consider varilet lens  $\cC = \{C_i\ |\ i = 0 \ldots n\}$ having half-open varilet supports $\halfsupport{D_0} \ldots \halfsupport{D_n}$.

\begin{definition}
\label{pc_def}
Lens region $C_i$ is  \emph{persistence closed} when local extremum $p \in C_i^\circ$ implies $\ap{p} \subset C_i$. 
\end{definition}

\begin{definition}
\label{ed_def}
Lens region $C_i$ is  \emph{externally dominated} when local extremum $p \in \halfsupport{D_i}$ implies that $p$ has a dominator $r$
such that $r \not\in C_j^\circ$ for all $C_j \subsetneq C_i$.
\end{definition}

\begin{definition}
\label{persistence_lens_def}
Varilet lens  $\cC$ is a \emph{persistence lens} when $\cC$ is trackable, and every lens region $C \in \cC$ is persistence closed and
externally dominated.
\end{definition}


\subsubsection{Extremal Tracking and Persistence Semi-stability}

A persistence lens ensures the following \emph{extremal tracking} and \emph{persistence semi-stability} properties, proved in appendix \ref{extremal_tracking_section}.

\begin{prop}
\label{extremal_prop}
When using a persistence lens, then for each extremum $p'$ of any simplification $f'$, there exists a same sense extremum $p$ of $f$ such that $p \in \phi^{-1}(p')$, and for every such $p$ we have $\per{p} \ge \per{p'}$.
\end{prop}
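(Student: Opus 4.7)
The plan is to split the claim into two parts: an existence claim (a same-sense extremum $p \in \phi^{-1}(p')$ exists) and a persistence inequality ($\per{p} \ge \per{p'}$ for every such $p$). Throughout I would assume without loss of generality that $p'$ is a local maximum, since the local-minimum case is symmetric.

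For the existence claim I would analyze the fiber $\phi^{-1}(p')$. Because $\phi$ is monotone, is a homeomorphism on $\cokernel^\circ$, and collapses each kernel component to a single point, $\phi^{-1}(p')$ is either a singleton $\{p\}$ with $p \in \cokernel^\circ$, or an entire kernel component $C_i$. In the singleton case, local homeomorphy combined with $\lambda' \circ \phi = \lambda^\flat = \lambda$ on $\cokernel^\circ$ immediately promotes $p'$ to a same-sense extremum $p$ with $\lambda(p) = \lambda'(p')$. In the kernel-component case, pulling back a small neighborhood $U'$ of $p'$ on which $\lambda' \le v_i := \lambda'(p') = \lambda(\partial C_i)$ gives $\lambda \le v_i$ on $\phi^{-1}(U') \smallsetminus C_i$; since $\lambda$ is monotone and non-constant on every edge of $M$, this forces $\lambda < v_i$ in a punctured $M$-neighborhood of $\partial C_i$. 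Then either $\max \lambda|_{C_i} > v_i$, attained at some $p \in C_i^\circ$ which is automatically a local max of $\lambda$ in $M$, or $\max \lambda|_{C_i} = v_i$, in which case every point of $\partial C_i$ is itself a local max of $\lambda$ in $M$. Either sub-case yields $p \in C_i$ with $\lambda(p) \ge \lambda'(p')$.

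For the persistence inequality I would lift a persistence path $Q'R' = \perboom{p'}{q'}{r'}$ witnessing $\per{p'}$ back to $M$ using monotonicity of $\phi$. Starting at $p$, whenever the projected path passes through a collapse point $\phi(C_j)$ (necessarily at $\lambda'$-value $v_j \ge \lambda'(q')$), the lift enters and exits $C_j$ through $\partial C_j$ at $\lambda$-value $v_j$, so the resulting lift $\tilde Q \tilde R$ has $\min \lambda \le \min \lambda' = \lambda'(q')$ and terminates at a preimage $r$ of $r'$. Interpreting $\tilde Q \tilde R$ as a persistence path for $p$ then yields $\per{p} \ge \lambda(p) - \lambda'(q') \ge \lambda'(p') - \lambda'(q') = \per{p'}$.

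The main obstacle is certifying that the lift is a genuine persistence path in the kernel-component case, where $\lambda(p)$ may strictly exceed $\lambda'(p')$: we must verify the lifted endpoint $r$ satisfies $\lambda(r) > \lambda(p)$, not merely $\lambda(r) > \lambda'(p')$. This is precisely where the three persistence-lens axioms intervene. Persistence closure confines the apogee structure of any extremum of $C_i^\circ$ inside $C_i$, pinning the relevant dominator geometry to a region we control; external dominance then guarantees that such an extremum admits a dominator whose escape path is not obstructed by any strictly smaller lens region $C_j \subsetneq C_i$; and trackability supplies the requisite critical point on $\partial C_i$ at which the apogee-to-dominator transition can be anchored. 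A secondary technical point is that fibers meeting $\partial \cokernel = \partial \kernel$ must be shown to collapse into the kernel-component sub-case, since any such boundary point lies in some kernel component by definition.
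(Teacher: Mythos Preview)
Your decomposition into existence and persistence-inequality matches the paper's split (Propositions~\ref{extremal_tracking_lemma} and~\ref{noninc_pers_lemma}), and your existence argument for the kernel-component case is actually more direct than the paper's: you locate a maximum in $C_i$ by examining $\max\lambda|_{C_i}$, whereas the paper runs a minimal-persistence contradiction argument, using persistence closure, to place a maximum specifically on $\partial C_i$. That stronger boundary placement is what the paper then leans on in its inequality proof, where it asserts $p\in\cokernel$ and hence $\lambda(p)=\lambda'(p')$.

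The persistence-inequality step, however, is pointed in the wrong direction. You lift a persistence path $Q'R'$ for $p'$ up to $M$ and note that the lifted path has $\min\lambda\le\lambda'(q')$. But a path from a maximum $p$ to a strictly higher point, with minimum value $m$, only tells you that the path must exit $\prgn{p}$ through $\partial\prgn{p}$, whence $m\le\lambda(\partial\prgn{p})$; that is, it yields $\per{p}\le\lambda(p)-m$, an \emph{upper} bound on $\per{p}$. Your claimed inequality $\per{p}\ge\lambda(p)-\lambda'(q')$ therefore does not follow from the lift, and the concluding appeal to the three axioms does not repair this: the inequality you need simply cannot be read off a lifted path.

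The paper runs the argument the other way. Starting from a persistence path $QR=\perboom{p}{q}{r}$ for $p$ in $M$, it invokes external dominance to choose the dominator $r\in\cokernel$, and then \emph{pushes forward} along $\phi$. Because each kernel component collapses to a single point at its boundary value, one obtains $\lambda'(s')\ge\min\lambda(\phi^{-1}(s')\cap QR)$ along the image path, so the projected path from $p'$ to $r'=\phi(r)$ (with $\lambda'(r')=\lambda(r)>\lambda(p)=\lambda'(p')$) has minimum at least $\lambda(q)$. The exit-through-$\partial\prgn{p'}$ argument, now applied in $M'$, then gives $\per{p'}\le\lambda'(p')-\lambda(q)=\per{p}$. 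External dominance is what makes the pushed-forward endpoint genuinely dominate $p'$; trackability and persistence closure play no role in this half.
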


In the proposition, $\phi$ is the monotone quotient map of diagram (\ref{simplfication_diagram}).

This is a ``sense making'' result, extending trackability to extrema and providing a limited form of stability of persistence: Every extremum of a simplified image is \emph{tracked to} by one or more extrema in the original image, and persistence does not grow. 
The particulars of definition \ref{persistence_lens_def} (persistence lens) were chosen specifically for this reason.

\subsubsection{Construction of Persistence Lens by Conflict Resolution}
\label{lens_algo_section}

Let $\cC^{\uparrow}$ be the collection of all persistence regions $\prgn{p}$ for minima $p$ of $f$; and similarly $\cC^{\downarrow}$ for maxima\footnote{The arrows reflect the sweep direction.}.
Each of $\cC^{\uparrow}$ and $\cC^{\downarrow}$ is a persistence lens.

However, $\cC^{\uparrow} \cup\ \cC^{\downarrow}$ is not in general a varilet lens.
We say that $C_1 \in \cC^{\uparrow}$ \emph{overlaps} $C_2 \in \cC^{\downarrow}$ when they have non-empty intersection but are not nested.

We construct a persistence lens by an iterative process of \emph{conflict resolution} for overlapping regions.
There are multiple ways to resolve conflicts; this is a source of multiplicity of persistence lenses.

We construct a persistence lens by first constructing $\cC^{\uparrow} \cup\ \cC^{\downarrow}$, and then removing or substituting for every pair of overlapping regions, iterating this process until no conflicts remain.

Suppose $C_1 \in \cC^{\uparrow}$ overlaps $C_2 \in \cC^{\downarrow}$.
One may resolve this conflict by simply omitting both regions from the final result.
However, we want lenses to have more structure, not less, and therefore are motivated to find alternatives.
Two solutions that we currently use for images include the following\footnote{Addition additional conflict resolutions are possible.}:

\begin{tightList}
\item[\emph{Choice}]: Choose one of $C_1$ and $C_2$. 
\item[\emph{Union}]: When $\lambda(\partial C_1) = \lambda(\partial C_2)$, substitute lens region $C_1 \cup C_2$.
\end{tightList}

Without getting into the details of the iteration, we can sketch an inductive proof that iterated conflict resolution results in a persistence lens. Starting with  $\cC^{\uparrow} \cup\ \cC^{\downarrow}$, all persistence lens requirements \emph{except nesting} are met. This situation obtains after applying the conflict resolution rules. The iteration halts when the lens regions are nested.

\subsection{Image Scale Space}
\label{image_scale_space_section}

In this section we endow varilet basis functions with scale measures.
The smallest detail appearing in $f$ when viewed through lens $\cC$ is expressed in terms of varilet transform $\{g_i\ |\ i= 0\ldots n\}$ as the minimum value of $\scale(g_i), i=0 \ldots n$\ .
Simplification can \emph{increase} scale by removing small detail.

\subsubsection{Scale Measures}
\label{scale_measure_section}

A \emph{scale measure} $\scale$ assigns to each piecewise monotone function $g:X \to \Real$ a nonnegative number $\scale(g)$. 
Scale measures will be applied to varilets.

We currently use the following scale measures; many more are possible.
This list comprises one geometric, one topological, and one image measure.

\begin{tightList}
\item The area of $g$'s support.
\item Topological total variation $\ttv(g)$\ .
\item Contrast $||g|| = \max\ g(X) - \min\ g(X)$.
\end{tightList}

\subsubsection{Scale Space Simplification}
\label{scale_space_section}

Thresholding scale measure $\scale$ at value $T$ provides the cokernel of a \emph{scale space simplification} by pruning the  hierarchy of persistence lens $\cC = \{C_i\ |\ i = 0 \ldots n\}$\ : 

\begin{equation}
\cokernel = \cup\ \{ D_i\ |\ C_j\supset C_i \text{ implies } \scale(D_j) \ge T\}\ .
\end{equation}

As expressed by equation (\ref{simplified_sum_eqn}), the resulting simplified image $f'$ has a naturally induced varilet lens for which the varilet basis is a subset of $f$'s varilet basis, each having $\scale(g_i) \ge T$.

By varying the threshold $T$ we may generate the collection of all \emph{distinct} scale threshold simplifications of $f$; this is $f$'s \emph{scale space} for persistence lens $\cC$ and scale measure $\scale$.

Not every simplification is a scale space simplification.
We measure the scale measure's \emph{fit} to lens $\cC$ as the fraction  of simplifications that are scale space simplifications; this is typically in the range $85 - 98\%$.

\subsection{Image Fractal Regions}
\label{fractal_section}

A \emph{fractal function}'s graph has non-integral Hausdorff dimension,  the function's \emph{fractal dimension} \cite{Mandelbrot}. 
Fractal analysis of multiscale data makes use of a variety of empirical \emph{fractal indices}, some of which correspond to fractal dimension, whereas others  stand on their own merit \cite{Mandelbrot, Theiler}. 

We use the following \emph{size counting} paradigm for fractal analysis:

\begin{tightList}
\item[(1)] Let $\cS$ be a multiset of nonnegative numbers, each of which is considered to be a measurement of the ``size'' of a ``feature'' of $f$.
\item[(2)]  Create the empirical distribution $\cD$ of \emph{feature size counts} from $\cS$.
\item[(3)]  Determine whether $\cD$ has a power law distribution, and if so, estimate the exponent.
\end{tightList}

For step 3, Clauset et al. \cite{Clauset, virkar2014} provide a maximum likelihood estimator for the power law exponent, using the Kolmogorov-Smirnov statistic for goodness of fit. 

We determine the fractal characteristics of image $f$ using a persistence lens $\cC = \{C_i\ |\ i = 0 \ldots n\}$ and scale measure $\scale$. 

Suppose lens  $\cC$ gives varilet basis $g_0 \ldots g_n$ having supports $D_0 \ldots D_n$.
To view a lens region $C_i \in \cC$ as a fractal, we use Clauset's maximum likelihood estimator on the distribution of varilet scales $\{\scale(g_j)\ |\ D_j \subset C_i\}$.

We calculate the power law exponent and goodness of fit for every lens region $C_i \in \cC$, and then aggregate the largest-possible regions having consistent exponent and high goodness of fit, resulting in analysis as shown in figure \ref{fractal} of section \ref{example_section}. 

Varying the choice of lens and/or scale measure causes minor variation of the fractal regions, an area of ongoing research.

\subsection{Theory Summary}
\label{theory_summary_section}

Varilet analysis is a novel and elementary extension of classic results of analytic topology \cite{Whyburn1942}.
Varilet analysis may have benefits as part of the larger image processing toolbox.
These include theoretically and algorithmically elementary forms of:

\begin{tightList}
\item[$\bullet$] multiresolution analysis,
\item[$\bullet$] vector graphics display,
\item[$\bullet$] scale space and fractal analysis.
\end{tightList}

Varilet analysis focuses on  \emph{monotonicity} in several guises, e.g.\ \emph{topological} monotonicity of $\mu$; \emph{numerical} monotonicity of $\lambda$ along the graph edges of $M$; monotone \emph{quotient map} $\phi$, and \emph{piecewise} monotonicity of $f$. In this respect varilet analysis may be complementary to existing methods.

By transforming $f$'s representation to varilet basis functions $g_i$, image simplification corresponds to deletion of terms from the sum $f = \sum g_i$.

Because varilet image analysis happens in the middle space, it relies on data collected during the monotone-light factorization in order to compute $\mu^{-1}$ and thereby pull simplifications back to the image space. The data for each varilet $g_i$ can be coded in various ways, and may be complete or incomplete. In the present application to image analysis, we chose to code Jordan boundaries only, leaving as unmodelled the more complex aspects of the image  (section \ref{segmentation_section})\footnote{It is possible to collect additional data for $\mu^{-1}$ in order to have more options and control of the rendered image representation, but we have not done so in this paper.}. 

Varilet analysis relies on \emph{finiteness} of the Reeb graph and \emph{continuity} of the luminance image.  This is a drastic mathematical simplification when compared to, say, functions of bounded variation. However, we see images as fundamentally finite, and we are content to allow closely spaced contours to straddle the ambiguity between continuous and discontinuous.

As an overall summary: Varilet image analysis provides many capabilities that are also provided by existing techniques, but that varilet analysis does so in a mathematically and algorithmically elementary way. This viewpoint is further explored in the next section.

\section{Related Work, Contributions and Discussion}
\label{related_work_section}

Varilet image analysis has commonalities with many theories. In this section we compare selected image processing approaches to varilets. 

\subsection{Reeb Graph \& Simplification}

Simplification of sampled two-dimensional scalar fields has appeared in work by Carr \cite{Carr2004}, Carr et al. \cite{CarrSP04}, Weber et al. \cite{Weber2007}, Bremer et al. \cite{Bremer2004}, Edelsbrunner et al. \cite{Edelsbrunner2002, Edelsbrunner2006}, Gyulassy et al. \cite{Gyulassy2006}, Bauer et al.\ \cite{Bauer2012}, and Tierny et al.\ \cite{Tierny2012, tiernyoptimal}.  

Each of these  references uses the topological structure of the scalar field to guide simplification. Carr et al. \cite{Carr2004, CarrSP04} and Weber et al. \cite{Weber2007} use the Reeb graph \cite{Reeb1946},  Bremer et al. \cite{Bremer2004} and Gyulassy et al.\ \cite{Gyulassy2006} use the Morse-Smale complex \cite{Edelsbrunner2003}, and Edelsbrunner et al. \cite{Edelsbrunner2002, Edelsbrunner2006} use the persistence diagram. The Reeb-based techniques are concerned with removing extrema; the Morse-Smale and persistence-diagram methods may also remove critical points related to the genus of isosurfaces. 

Varilet's contribution to Reeb graph analysis is formulation of simplification as a quotient (diagram \ref{simplfication_diagram}). This is accomplished by transforming $f$'s representation to a varilet basis $g_i$, in which simplification corresponds to deletion of terms from the sum $f = \sum g_i$.

In Carr et al. \cite{CarrSP04}, the order in which extrema are removed is determined by pruning contour tree leaves in preference order, using any of a variety of local geometric measures; this reference motivated varilets' use of geometric, topological and image measures (section \ref{image_scale_space_section}).

Piecewise monotone functions have identical middle space and Reeb graph. The monotone-light factorization entails additional information in the form of the monotone and light factors, as reflected in our notation $\mu M\lambda$.
There is nothing that prevents Reeb graph analysis from recognizing these functions. 
For example, light factor $\lambda$ (differently named) was used to show \emph{stability} of the Reeb graph under perturbations of $f$ by Bauer et al.\ \cite{Bauer2014a} and Di Fabio et al.\ \cite{DiFabio}.

Bauer et al.\ \cite{Bauer2014a} simplify by removing features having persistence below a threshold. Although varilets use persistence to \emph{define} lens structure, varilets do not use persistence to drive simplification,  due to a ``type'' mismatch:  We have assigned persistence to individual extrema, but we measure scale on a basis function. \emph{Contrast} is the luminance image measure (section  \ref{scale_measure_section}) most closely related to persistence; e.g.\ contrast \emph{is} persistence for extremal persistence regions (section \ref{persistence_region_section}).  

Bauer et al.\ \cite{Bauer2012} combine discrete Morse theory and persistent homology for function simplification guided by discrete vector fields. As do varilets, as well as \cite{Carr2004, Tierny2012}, they simplify by flattening.
Tierny et al.\ \cite{Tierny2012, tiernyoptimal} simplify scalar fields by working directly in the image space with guidance from the middle space topology, whereas varilets work directly in the middle space, lifting the result back to the image with $\mu^{-1}$ only at the end. 
Image-space simplification in \cite{Bauer2012,Tierny2012, tiernyoptimal} is powerful because it exercises full control over all details of the image. 
As discussed in sections \ref{segmentation_section}\ \&\ \ref{theory_summary_section}, this is in contrast to the varilet image processing, where we incompletely model image structure. 

Computational methods for simplification of three-dimensional visualization geometry use edge contraction in a triangular mesh \cite{Hoppe1996}. Some approaches include topological considerations based on the Reeb graph, e.g. Takahashi et al. \cite{Takahashi2004}. These works differ from varilet simplification, because they focus on simplifying the domain geometry of triangulated surfaces rather than simplifying scalar fields on a fixed domain. 

\subsection{Image Segmentation}

Mumford et al. \cite{Mumford_Shah} define a regularized equation whose solution provides a complete global image segmentation.
Their approach has been refined and solutions have been explored; for a review see \cite{Lemenant2016}.

Hierarchical image segmentation is also well-developed, including e.g.\ Abelaez et al.\ \cite{hierarchical_segmentation_1}, who provide the Berkeley Segmentation Data Set \cite{berkeley_database}.
Guigues et al.\ \cite{scale_sets, hierarchical_segmentation_2} use \emph{scale sets}, utilizing piecewise constant segmentation by regularizing within a hierarchy defined by persistence of regions. A similar approach is taken by Xu et al. \cite{hierarchical_segmentation_4}, with additional guidance from \emph{shape space} semantics.

Varilets' contribution to image segmentation is its mathematically elementary formulation of image segmentation as a hierarchy of open regions (lens facets) of the image space, each bounded by Jordan level sets. 

\subsection{Image Scale Space}

Starting with Witkin \cite{Witkin83} and Koenderink \cite{Koenderink}, \emph{scale space} has provided a parameterized family of smoothed images. For an overview, see Lindeberg \cite{Lindeberg}. 

Scale space theory includes both linear and nonlinear scale spaces. 
Linear scale spaces result from Gaussian smoothing, equivalently formulatied as heat diffusion \cite{Koenderink}.
Nonlinear scale spaces have various motivations, including the fact that certain types of multiresolution sensing systems are built around non-Gaussian filters \cite{Zuerndorfer}, a desire to extend the scale space concept to morphological filtering \cite{Chen1989, Morales}, and dissatisfaction with Gaussian filtering for vision applications \cite{Saint-Marc, Perona}. 

Varilet scale space \emph{looks} different than these references, but shares their basic intent: a sequential removal of detail, where at each stage the removed detail has smaller scale than what remains. Varilet scale space fully embraces the semantics of ``causality'' \cite{Koenderink}, using simplification's monotone quotient map to track identity (section \ref{trackability_section}). 

Reininghous et al.\ \cite{Reininghaus} define \emph{persistence scale space}, conceptually similar to varilets, but based on persistent homology and discrete Morse theory, whereas varilets are based on more elementary analytic topology.

Chen et al.\ \cite{Chen} study persistence diagrams norms as a function of the degree of scale space diffusion. Their experience of rapidly decreasing  norms is consistent with proposition \ref{extremal_prop}. Their figure 2 shows a linear log-log relationship between scale and number of extrema, indicating the possibility of a power law distribution; this would be similar for many natural images, due to naturally-occurring fractal content. We note that power laws constitute a relatively \emph{slow} decay; one may expect non-fractal images to exhibit exponential decay, further supporting the reference's observation.

Monasse et al.\ \cite{scale_space_level_lines} construct a scale space from level sets by representing them with all holes filled in.
By comparison, varilet analysis retains the holes of a level set's interior. 

Chao et al.\ \cite{Chao} construct a topologically multiscale scale space Reeb graph for content matching applications.

\subsection{Image Fractal Analysis}

Fractal structure of textures and natural images is well known; for example \ \cite{scaling_woods, turiel2000multifractal, fractal_texture}, and see the image processing applications at FracLab \cite{fraclab}. Blondeau et al.\ \cite{fractal_color} measure fractal color distribution. Various measurement and estimation schemes are applied, including wavelets, box-counting, energy and pixel methods. 

Varilet analysis is complementary to these methods, utilizing a different measurement and counting domain for power law estimation (section \ref{fractal_section}): For any choice of persistence lens, together with any choice of scale measure, the count of varilets by scale is input to Clauset's maximum likelihood estimator for power law exponent \cite{Clauset}  (figure \ref{fractal}). Again, this approach is simpler and more direct than many, and may therefore be useful.

\subsection{Jordan Boundaries}

It has been recognized, e.g.\ by Ambrosio et al. \cite{Ambrosio, Monasse}, that a level set's interior's boundary components are Jordan curves; these same references construct a hierarchy of Jordan curves boundaries with similarlies to varilet's. Varilet analysis differs from this work in two ways: (1) Whereas the references work in the image space, varilets work in the middle space; and (2) the references merge two distinct hierarchies of Jordan curves to get \emph{the} region hierarchy, whereas varilets utilize an externally supplied hierarchy in the form of a \emph{lens} parameter (section \ref{lens_algo_section}).  Different lenses may be used, in accordance with differing requirements and preferences.

\subsection{Image Vectorization}

Varilet image analysis provides vector representation of hierarchical image segmentation, by vectorizing the Jordan lens facets' boundary components. The luminance value is identical for all Jordan boundaries of the same lens region. 

Image vectorization methods typically use image segmentation and/or edge detectors, e.g.\ Selinger \cite{potrace}. Birdal et al. \cite{birdal} merge regions of similar color. Kopf et al.\ \cite{Kopf} use heuristics to group pixels into cells.

Orzan et al.\ \cite{Orzan}, Xie et al.\ \cite{Xie}, and Olsen et al.\ \cite{Olsen}  combine image simplification and vectorization using multiscale diffusion curves.

Fuchs et al.\ \cite{svg_streaming_1} provide a level-of-detail approach to progressive SVG imagery. Whereas their SVG is the entire image, varilets' SVG is the segmented image (section \ref{segmentation_section}); therefore the two methods are not addressing the same problems. However, varilets' lens hierarchy does provide a natural source for level-of-detail SVG streaming of the the segmented image.

\subsection{Image Total Variation}

Varilet analysis takes a new view by defining topological total variation as the sum of the Reeb graph's arc lengths (section \ref{varilet_section}).
$\ttv$ is fundamental; the varilet basis functions partition $\ttv$ in analogy to Fourier and wavelet partitions of energy.
$\ttv$ serves as a topological measure for scale space and fractal analysis.

Total variation image denoising is well known \cite{Rudin}.

Bauer et al.\ \cite{Bauer2010} link total variation and persistence in denoising.
Plonka et al.\ \cite{Plonka} apply a variant of total variation denoising employing a regularization term having persistence-derived coefficients.
Further research may show relationships to varilets' use of a persistence lenses for image processing.

\section{Conclusion}

We have presented a novel image processing approach having very direct expressions of image segmentation, simplification, vectorization, scale space and fractal analysis. 
The purpose of our exposition is to create awareness of varilet analysis as a basis for additional image processing tools. 

The author wishes to thank Dr. Michael Stieber at Apollo Systems Research Corporation, and also National Research Council Canada, for support of this research.

\appendix

\section{Appendix: Proof of Extremal Tracking and Persistence Stability}
\label{extremal_tracking_section}

Consider lens $\cC$ and simplification $f'$ having quotient map $\phi$, kernel $\kernel$, cokernel $\cokernel$, and monotone-light factorization $\mu'M'\lambda'$.

We prove proposition \ref{extremal_prop} in two parts in the following two subsections.

\subsection{Extremal Tracking}




\begin{prop}
\label{extremal_tracking_lemma}
Let $C_k$ be a persistence-closed kernel component containing no global extrema in its interior.
Then $\phi(C_k)$ is an extremum only when $\partial C_k$ contains a same-sense extremum.
\end{prop}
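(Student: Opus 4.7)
My plan is to argue by contradiction. Assume without loss of generality that $\phi(C_k)$ is a local maximum of $f'$ (the minimum case being symmetric), write $L = \lambda(\partial C_k)$, and suppose $\partial C_k$ contains no local maximum of $f$. I will exhibit an interior maximum together with its persistence-closure witness and show this is incompatible with the maximum condition at $\phi(C_k)$.

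First I translate the hypothesis. Because $\lambda' \circ \phi = \lambda^{\flat}$ and $\lambda^{\flat} \equiv L$ on $C_k$, the statement that $\phi(C_k)$ is a local maximum of $\lambda'$ is equivalent to the existence of a neighborhood $U$ of $C_k$ in $M$ on which $\lambda^{\flat} \le L$; since $\lambda$ is strictly monotone along each edge of $M$, every edge exiting $C_k$ at a boundary point $y \in \partial C_k$ must strictly descend as it leaves $C_k$. Combined with the assumption that no $y \in \partial C_k$ is a local maximum of $f$, each boundary point has some ascending incident edge, and that ascending edge must enter $C_k^{\circ}$. Hence $\max \lambda(C_k) > L$, attained at some $p \in C_k^{\circ}$ which is therefore a local maximum of $f$ in $M$.

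Next I invoke persistence closure together with the dominator machinery. Since $C_k^{\circ}$ contains no global extrema, $p$ is not global and so possesses an apogee $q \in \ap{p}$ and a dominator $r$ joined by a path $R = \boom{p}{q}{r}$ satisfying $\lambda(r) > \lambda(p)$, $R \cap \prgn{p} = \{q\}$, and $\lambda(R \smallsetminus \{q\}) = \intervalOC{\lambda(q)}{\lambda(r)}$. Persistence closure of $C_k$ gives $q \in C_k$, while $\lambda(r) > \max \lambda(C_k)$ forces $r \notin C_k$, so $R$ crosses $\partial C_k$ at a first exit point $r^{*}$ with $\lambda(r^{*}) = L$.

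The contradiction then comes from comparing $\lambda(r^{*}) = L$ against the dominator-path constraint $\lambda > \lambda(q)$ on $R \smallsetminus \{q\}$. If $q \in \partial C_k$ then $\lambda(q) = L$; the segment of $R$ just past $q$ either exits $C_k$ immediately (contradicting the descending-exit property, since then $\lambda > L$ just outside $C_k$) or first re-enters $C_k^{\circ}$ and later re-exits at some $r^{**} \in \partial C_k$ with $\lambda(r^{**}) = L = \lambda(q)$, violating $\lambda(R \smallsetminus \{q\}) \subset \intervalOO{\lambda(q)}{\lambda(r)}$. If instead $q \in C_k^{\circ}$, then $r^{*} \ne q$ forces $\lambda(q) < L$, so $\prgn{p}$ straddles the level $L$ inside $C_k$; I expect to close this remaining sub-case — the main obstacle — by invoking persistence closure symmetrically for the interior local minimum whose existence is compelled by $\lambda(q) < L$ and combining with the descending-exit property of step one to once more contradict the existence of the dominator path.
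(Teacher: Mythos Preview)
Your argument is correct up through the case $q \in \partial C_k$, but the case $q \in C_k^{\circ}$ with $\lambda(q) < L$ is a genuine gap, and your proposed fix does not close it.  Concretely: for the interior local minimum $m$ you invoke, the dominator path of $m$ has $\lambda$ values \emph{below} $\lambda(\ap{m})$, so the ``descending-exit'' property of $\partial C_k$ is exactly the direction that \emph{allows} that path to leave $C_k$ without contradiction.  Symmetrizing therefore buys nothing; you would be left with the same unresolved sub-case on the minimum side.  More generally, fixing a single extremum $p$ (even the top one, with $\lambda(p)=\max\lambda(C_k)$) and analyzing its one persistence path cannot succeed, because the portion of the dominator path outside $C_k$ may dip below $L$ yet stay above $\lambda(q)$ before rising to $\lambda(r)$, and nothing forces it to re-enter $C_k$.

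The paper's proof supplies the missing idea: an extremal-choice argument over \emph{all} interior maxima whose dominators lie outside $C_k$.  One first records Property~X (every boundary point of $C_k$ is joined by a $\lambda$-increasing path inside $C_k$ to some interior maximum), then picks among those maxima one, $p_1$, of \emph{minimal persistence}.  Following $p_1$'s dominator path to its last exit $u\in\partial C_k$ and applying Property~X at $u$ produces a second interior maximum $p_2$ which, by construction, still has all dominators outside $C_k$ but has strictly smaller persistence than $p_1$---contradicting minimality.  It is this minimal-persistence selection, not a max/min symmetry, that handles your obstacle case.
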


\begin{proof}
Suppose $\phi(C_k)$ is a maximum. Then for each boundary point $u \in \partial C_k$, every edge $E \ni u$ that does not lie entirely in $C_k$ terminates at a critical point $u^- \notin C_k$, with $\lambda(u^-) < \lambda(u)$. 

Assume $\partial C_k$ is comprised entirely of saddles and regular points; we will derive a contradiction. 
From this assumption we have:

\begin{tightList}
\item[\ \ \ ]Property X: For every $u \in \partial C_k$ there exists a maximum $p \in C_k^\circ$ and a path $P \subset C_k$ from $u$ to $p$ upon which $\lambda$ is only increasing. 
\end{tightList}

We now proceed with the proof. There exists at least one maximum $p \in C_k$ such that each of its dominators $r \notin C_k$;
for example, $p$ may be chosen by the condition $\lambda(p) = \max \lambda(C_k)$.

Let $p_1$ be such a maximum, chosen with the additional constraint that $\per{p_1}$ is minimal over all such maxima. 

Choose any dominator path $R$ for $p_1$, ending at a dominator $r_1$.

Let $u \in \partial C_k \cap R$ be the last boundary point along $R$; then along the subpath of $R$ that  starts at $u$ and continues outside $C_k$, let $t \notin C_k$ a point for which $\lambda(t)$ is least. 
Then $t \ne u$, and by definition \ref{persis_def} we know $\lambda(\ap{p_1}) < \lambda(t)$.

Let $p$ be the extremum stipulated by property $X$; we claim $\lambda(\ap{p}) > \lambda(t)$. We know $\lambda(p) \le \lambda(p_1)$, because otherwise $p$ would be a dominator for $p_1$; therefore $\lambda(r_1) >\lambda(p)$. Thus, if not $\lambda(\ap{p_1}) > \lambda(t)$, then $\lambda(\ap{p_1}) = \lambda(t)$ and $\ap{p} \not\subset C_k$; this contradiction proves the claim.

Now consider \emph{any} maximum $p \in C_k$ such that  $\lambda(\ap{p}) \ge \lambda(t)$, and such that there exists a path $P \subset C_k$ from $u$ to $p$ having $\min \lambda(P) > \lambda(\ap{p_1})$. Such maxima exist, because the maximum and path stipulated by Property X satisfy the criteria.
Then $\lambda(p) \le \lambda(p_1)$, since otherwise $p$ would be a dominator of $p_1$.
It follows that $\per{p} < \per{p_1}$. 

Now choose such a maximum $p_2$, with the additional constraint that $\lambda(p_2)$ is maximal over all such extrema. We claim that by maximality, $r \notin C_k$ for every dominator $r$.  The claim follows because the stipulated path $P$ from $u$ to $p_2$ can be concatenated with the persistence path $QR$ path from $p_2$ through some $q \in \ap{p}$ to same-sense extremum dominator $r$; this concatenated path $PQR$ satisfies $\min \lambda(PQR) > \lambda(\ap{p_1})$, and therefore does not lie in $C_k$.  

Finally, this last conclusion causes $\per{p_2} < \per{p_1}$ to contradict the minimality of $\per{p_1}$, thereby contradicting property X and the assumption that $\partial C_k$ does not contain a maximum. 
\end{proof}


\subsection{Non-Increasing Persistence}


%

\begin{prop}
\label{noninc_pers_lemma}
$\per{p'} \le \per{p}$ for every local extremum $p' \in M'$ and each extremum $p \in \phi^{-1}(p')$.
\end{prop}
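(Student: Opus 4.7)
The strategy is to control $\per{p}$ from below via the elder rule, by lifting a persistence path of $p'$ in $M'$ back through the monotone quotient $\phi$ to a path in $M$ starting at $p$. I treat the maximum case; the minimum case is symmetric under reversing $\lambda$.

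First, locate $p$ relative to the kernel and cokernel. By Proposition \ref{extremal_tracking_lemma} together with the fact that $\phi$ is a homeomorphism on $(\cokernel)^\circ$, the extremum $p$ lies either in $(\cokernel)^\circ$ or in $\partial C_k$, where $C_k$ is the kernel component with $\phi(C_k)=p'$. Because $\lambda'\circ\phi=\lambda^\flat$, and $\lambda^\flat$ equals $\lambda$ off kernel interiors and is constantly $\lambda(\partial C_k)$ on $C_k$, in both cases $\lambda(p)=\lambda'(p')$.

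Next, let $P'=\perboom{p'}{q'}{r'}$ be a persistence path for $p'$ in $M'$, so that $\per{p'}=\lambda'(p')-\lambda'(q')$ and $\lambda'(r')>\lambda'(p')$. I build a lift $P\subset M$ piecewise. The path $P'$ decomposes into arcs in $\phi((\cokernel)^\circ)$ separated by vertices of $M'$ that are images of kernel components; each cokernel arc lifts uniquely, and at each contracted vertex $\phi(C_{k_i})$ I bridge by any path in the connected set $C_{k_i}$ joining the boundary-entry to the boundary-exit of the adjacent cokernel lifts. When $p\in\partial C_k$, I prepend a path in $C_k$ from $p$ to the initial exit point on $\partial C_k$; when $r'$ is a contracted vertex I terminate $P$ at any $r\in\partial C_{k_r}$. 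Then $\lambda(r)=\lambda'(r')>\lambda(p)$, and by routing the lift through $q'$ to pass through a point $q\in\phi^{-1}(q')$ with $\lambda(q)=\lambda'(q')$ (the unique cokernel preimage, or a boundary point of the contracted component $C_{k_0}$), I ensure $\min_{x\in P}\lambda(x)\le\lambda'(q')$.

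Finally, invoke the elder-rule characterization of persistence in $M$: for any path from local maximum $p$ to a point $r$ with $\lambda(r)>\lambda(p)$, the path's $\lambda$-minimum is at most $\lambda(\ap{p})$, since otherwise the connected component of $\{x:\lambda(x)\ge\min\lambda(P)\}$ containing $p$ would also contain $r$, violating the maximality in Definition \ref{persis_def}. Therefore
\[
\per{p}=\lambda(p)-\lambda(\ap{p})\ge\lambda(p)-\min_{x\in P}\lambda(x)\ge\lambda(p)-\lambda'(q')=\per{p'}.
\]
The principal obstacle is the lift construction: making the piecewise curve start at the prescribed $p$, stay inside each relevant $C_{k_i}$ on its bridging arc, and be a simple path. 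These checks rely on path-connectivity of lens regions and the fact that $\phi$ restricts to a homeomorphism on $(\cokernel)^\circ$.
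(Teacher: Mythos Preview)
Your final displayed chain is where the argument breaks. The elder-rule statement you give is correct: any path $P$ from a local maximum $p$ to a point $r$ with $\lambda(r)>\lambda(p)$ must cross $\partial\prgn{p}$, so $\min_{x\in P}\lambda(x)\le\lambda(\ap{p})$. But that inequality gives
\[
\lambda(p)-\min_{x\in P}\lambda(x)\ \ge\ \lambda(p)-\lambda(\ap{p})=\per{p},
\]
which is the reverse of the first inequality you wrote. Likewise, $\min_{x\in P}\lambda(x)\le\lambda'(q')$ gives $\lambda(p)-\min_{x\in P}\lambda(x)\ge\per{p'}$. You end up with two lower bounds on the same quantity $\lambda(p)-\min_{x\in P}\lambda(x)$, and these say nothing about the ordering of $\per{p}$ and $\per{p'}$.

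The difficulty is structural, not a slip in signs. Lifting a path from $M'$ back through $\phi$ can only \emph{lower} the $\lambda$-minimum, because the bridging arcs you insert inside kernel components $C_k$ may descend well below $\lambda(\partial C_k)=\lambda'(\phi(C_k))$; this is exactly what happens when $C_k$ is the persistence region of a minimum. So the lift supplies only an upper bound on $\min_P\lambda$, and the $\partial\prgn{p}$ crossing also supplies an upper bound --- nothing pins $\lambda(\ap{p})$ from above. The paper's proof runs the other direction: it takes a persistence path $\perboom{p}{q}{r}$ in $M$ with dominator $r\in\cokernel$ (external domination is used here), pushes it forward by $\phi$ to a path in $M'$, and uses that collapsing a kernel component to its boundary value can only \emph{raise} the minimum along the image path. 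That yields $\min\lambda'\,\ge\,\lambda(q)$ on the pushed path; since this path must still exit $\prgn{p'}$, one gets $\lambda'(\ap{p'})\ge\lambda(q)=\lambda(\ap{p})$, hence $\per{p'}\le\per{p}$. To repair your argument you need to reverse the transport: push $p$'s persistence path forward rather than lifting $p'$'s back.
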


\begin{proof}
Choose any local extremum $p' \in M'$, and then choose any same-sense extremum $p \in M$ such that $p \in \phi^{-1}(p')$.
Then $p \in \cokernel$ and $\lambda'(p') = \lambda(p)$.

Let $QR = \perboom{p}{q}{r}$ be a persistence path having dominator with $r \in \cokernel$.
Then $\lambda'(r') = \lambda(r)$, where $r' = \phi(r)$.

From $Q$ and $R$'s images $\phi(Q)$, $\phi(R)$ we define path $Q'R' \subset M'$, from $p'$ to $\phi(q)$, and then to $r'$. 
Since each kernel component is a constant-boundary region $C_k \in \cC$, there can be a loop within $\phi(Q)$ or $\phi(R)$ only if $Q$ or $R$ crosses multiple times in and out of $C_k$;
paths $Q'$ and $R'$ result from  excising all such loops.

We cannot derive a lower bound for $\per{p'}$, because we have no information about persistence paths for $p'$. 
But we can use $Q'R'$ to prove $\per{p'} \le \per{p}$ by showing that  $\per{p'} \ge \per{p}$ implies $\per{p'} = \per{p}$.
Assume $p$ to be a maximum. 

Neither $p$ nor $r$ lie in the interior of any kernel component, but points along path $QR$, including apogee $q$, may do so.
Consider any kernel component $C_k$ such that path $Q$ intersects $C_k^\circ$\ ; the following argument applies also when $R$ intersects $C_k^\circ$\ :
Because $\lambda'(\phi(C_k)) = \lambda(\partial C_k)$, we have
$\lambda'(\phi(C_k)) \ge \min \lambda(\phi^{-1}(Q \cap \phi(C_k)))$.
In other words, every point $s' \in Q'R'$ has 
\begin{equation}
\label{noninc_pers_eqn}
\lambda'(s') \ge \min \lambda(Q'R' \cap \phi^{-1}(s'))\ .
\end{equation}

Let $q'$ be the point along path $Q'R'$ at which $\lambda'$ takes its least value; if there are several points having this value, take $q'$ to be the last along the path. Let $Q''$ be the subpath of $Q'R'$ from $p'$ to $q'$, and let $R''$ be the subpath from $q'$ to $r'$.
Then by equation (\ref{noninc_pers_eqn}), $Q'' \subset \prgn{p'}$ and  $\lambda'(R'' \smallsetminus \{q'\}) = \intervalOC{\lambda'(q')}{\lambda'(r')}$\ .
Thus $\per{p'} \ge \per{p}$ only when $\per{p'} = \per{p}$, in which case $q' \in\ap{p'}$.
\end{proof}

\bibliography{varilets}
\bibliographystyle{plain}

\end{document}